\newtheorem{theorem}{Theorem}
\newtheorem{lemma}{Lemma}
\begin{document}
%
\title{Liquid Time-constant Recurrent Neural Networks as Universal Approximators}
\author{Ramin M. Hasani$^{1}$, Mathias Lechner$^{2}$, Alexander Amini$^{3}$, Daniela Rus,$^{3}$ and Radu Grosu$^{1}$\\
$^{1}$Cyber Physical Systems (CPS), Technische Universit\"{a}t Wien (TU Wien), 1040 Vienna, Austria\\
$^{2}$Institute of Science and Technology (IST), 3400 Klosterneuburg, Austria\\
$^{3}$ Computer Science and Artificial Intelligence Lab (CSAIL), Massachusetts Institute of Technology (MIT), Cambridge, USA\\
}
\maketitle

\begin{abstract}
In this paper, we introduce the notion of liquid time-constant (LTC) recurrent neural networks (RNN)s, a subclass of continuous-time RNNs, with varying neuronal time-constant realized by their nonlinear synaptic transmission model. This feature is inspired by the communication principles in the nervous system of small species. It enables the model to approximate continuous mapping with a small number of computational units. We show that any finite trajectory of an $n$-dimensional continuous dynamical system can be approximated by the internal state of the hidden units and $n$ output units of an LTC network. Here, we also theoretically find bounds on their neuronal states and varying time-constant. 
\end{abstract}

\section{Introduction}
Continuous-time spatiotemporal information processing can be performed by recurrent neural networks (RNN)s. In particular, a subset of RNNs whose hidden and output units are determined by ordinary differential equations (ODE), as in continuous-time recurrent neural networks (CTRNN)s \cite{funahashi1993approximation,mozer2017discrete}. Typically, in CTRNNs, the time-constant of the neurons' dynamics is a fixed constant value, and networks are wired by constant synaptic weights. We propose a new CTRNN model, inspired by the nervous system dynamics of small species, such as \emph{Ascaris} \cite{davis1989signaling}, \emph{Leech} \cite{lockery1992distributed}, and \emph{C. elegans} \cite{wicks1996dynamic,m2017sim}, in which synapses are nonlinear sigmoidal functions that model the biophysics of synaptic interactions. As a result, state of the postsynaptic neurons are defined by the incoming presynaptic nonlinearities to the cell. This attribute, originates varying time-constant for the cell and strengthen its individual neurons' expressivity in terms of output dynamics. 

Dynamic network simulations based on such models have been deployed in many application domains such as simulations of animals' locomotion \cite{wicks1996dynamic}, large-scale simulations of nervous systems \cite{gleeson2018c302,sarma2018openworm}, neuronal network's reachability analysis \cite{islam2016probabilistic}, model of learning mechanisms \cite{hasani2017non} and robotic control in reinforcement learning environments \cite{hasani2018re}.

In this paper, we formalize networks built based on such principles as liquid time-constant (LTC) RNNs (Sec. 2) and theoretically prove their universal approximation capabilities (Sec. 3). We also find bounds over their varying time-constant as well as their neuronal states (Sec. 4). 

\section{Liquid Time-constant RNNs}
Dynamics of a hidden or output \emph{neuron} $i$, $V_i(t)$, of an LTC RNN are modeled as a membrane integrator with the following ordinary differential equation (ODE) \cite{Koch98}:
\begin{equation}
\label{eqneuron}
C_{m_i} \frac{dV_i}{dt} = G_{Leak_i} \Big(V_{Leak_i} - V_i(t) \Big) + \sum_{j=1}^{n} I_{in}^{(ij)},
\end{equation}
with neuronal parameters: $C_{m_i}, G_{Leak_i}$ and $V_{Leak_i}$. $I_{in}^{(ij)}$ represents the external currents to the cell. Hidden nodes are allowed to have recurrent connections while they synapse into motor neurons in a feed-forward setting.  

\textit{Chemical synapses --} Chemical synaptic transmission from neuron $j$ to $i$, is modeled by a sigmoidal nonlinearity ($\mu_{ij}$,$\gamma_{ij}$), which is a function of the presynaptic membrane state, $V_j(t)$, and has maximum weight of $w_i$ \cite{Koch98}:

\begin{equation}
\label{eqchemsyn}
I_{s_{ij}} = \frac{w_{ij}}{1+ e^{-\gamma_{ij}(V_{j} + \mu_{ij})}} (E_{ij} - V_{i}(t)).
\end{equation}

The synaptic current, $I_{s_{ij}}$ is then linearly depends on the state of the neuron $i$. $E$, sets whether the synapse excites or inhibits the succeeding neuron's state. 

An \emph{electrical synapse} (\emph{gap-junction}), between node $j$ and $i$, was modeled as a bidirectional junction with weight, $\hat \omega_{ij}$, based on Ohm's law:
\begin{equation}
\label{eqgapjunc}
\hat{I}_{ij} = \hat \omega_{ij} \Big(v_j(t) - v_i(t)\Big).
\end{equation}

Internal state dynamics of neuron i, $V_i(t)$, of an LTC network, receiving one chemical synapse from neuron $j$, can be formulated as:
	\begin{equation}
	\label{eqoneneuron}
		 \frac{dV_i}{dt} = \frac{G_{Leak_i}}{C_{m_i}} \big( V_{Leak_i} - V_i(t)\big) + \frac{w_{ij}}{C_{m_i}} \sigma_i(V_j(t))(E_{ij} - V_{i}),
	\end{equation}
where $\sigma_i(V_j(t)) = 1/1+ e^{-\gamma_{ij}(V_{j} + \mu_{ij})} $. If we set the time-constant of the neuron $i$ as $\tau _i = \frac{C_{m_i}}{G_{Leak_i}}$, we can reform this equation as follows: 
	\begin{equation}
	\label{eqoneneuronreformed}
		 \frac{dV_i}{dt} = -\big(\frac{1}{\tau_i} +  \frac{w_{ij}}{C_{m_i}} \sigma_i(V_j)\big) V_i + \big(\frac{V_{leak_i}}{\tau_i}+ \frac{w_{ij}}{C_{m_i}} \sigma_i(V_j) E_{ij} \big). 
	\end{equation}
Eq. \ref{eqoneneuronreformed} presents an ODE system with a nonlinearly varying time-constant defined by $\tau_{system} = \frac{1}{1/\tau_i +  w_{ij}/C_{m_i} \sigma_i(V_j)}$, which distinguishes the dynamics of the LTC cells compared to the CTRNN cells. 

The overall network dynamics of the LTC RNNs with
$u(t)= [u_1(t), ..., u_{n+N}(t)]^T$ representing the internal states of $N$ interneurons (hidden units) and $n$ motor neurons (output units) can be written in matrix format as follows:
	\begin{equation}
	\label{eq1}
		\dot u(t) = - (1/\tau +W\sigma(u(t)))u(t) + A + W \sigma(u(t))B,
	\end{equation}
	in which $\sigma(x)$ is $C^1$-sigmoid functions and is applied element-wise. $\tau^{n+N}>0$ includes all neuronal time-constants, $A$ is an $n+N$ vector of resting states, $B$ depicts an $n+N$ vector of synaptic reversals, and $W$ is a $n+N$ vector produced by the matrix multiplication of a weight matrix of shape $(n+N) \times (n+N)$ and an $n+N$ vector containing the reversed value of all $C_{m_i}$s. Both $A$ and $B$ entries are bound to a range $[-\alpha, \beta]$ for $ 0 <\alpha < +\infty$, and $0 \leq \beta < +\infty$. $A$ contains all $V_{leak_i}/C_{m_i}$ and $B$ presents all $E_{ij}$s.

\section{Liquid time-constant RNNs are universal approximators}

In this section, we prove that any given finite trajectory of an $n$-dimensional dynamical system can be approximated by the internal and output states of an LTC RNN, with $n$ outputs, $N$ interneurons and a proper initial condition. Let $x=~[x_1,...,x_n]^T$ be the $n$-dimensional Euclidean space on  $\mathbb{R}^n$. 

\begin{theorem}
	Let $S$ be an open subset of $\mathbb{R}^n$ and $F:S\rightarrow\mathbb{R}^n$, be an autonomous ordinary differential equation, be a $C^1$-mapping, and $\dot x = F(x)$ determine a dynamical system on $S$. Let $D$ denote a compact subset of $S$ and we consider a finite trajectory of the system as: $I = [0,~T]$. Then, for a positive $\epsilon$, there exist an integer $N$ and a liquid time-constant recurrent neural network with $N$ hidden units, $n$ output units, such that for any given trajectory $\{x(t); t \in I\}$ of the system with initial value $x(0) \in D$, and a proper initial condition of the network, the statement below holds:
	\begin{center}
		$\underset{t \in I}{max} |x(t) - u(t)|< \epsilon$
	\end{center}	
\end{theorem}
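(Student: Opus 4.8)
The plan is to reduce the statement to two standard ingredients: (i) approximating the vector field $F$ in sup-norm by the \emph{effective} vector field that the LTC network induces on its $n$ output coordinates, and (ii) turning a small vector-field error into a small trajectory error over the finite horizon $I$ by a Gr\"onwall estimate. First I would confine everything to a compact set. Since $F\in C^1$ and $D\subset S$ is compact, each solution of $\dot x=F(x)$ with $x(0)\in D$ exists on $I$, and the union of all these trajectories lies in a compact set $K\subset S$. After replacing $F$ by a $C^1$ function that agrees with $F$ on a neighborhood of $K$ and has compact support, I may assume $F$ is globally bounded and Lipschitz with constant $L$; this secures global existence and licenses the Gr\"onwall step.

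Next I would read off the dynamics of a single output neuron from Eq.~\ref{eqoneneuronreformed} and Eq.~\ref{eq1}. Restricted to the output coordinates and collecting the feed-forward chemical input from the hidden units, the $i$-th output obeys
\[ \dot u_i = -\tfrac{1}{\tau_i}\,u_i + A_i + h_i(u)\,(B_i-u_i), \]
where $h_i(u)=\sum_j w_{ij}\,\sigma(\cdot)$ is a one-hidden-layer sigmoidal sum produced by the synapses of Eq.~\ref{eqchemsyn}. The LTC-specific difficulty is the bilinear factor $(B_i-u_i)$: the right-hand side is \emph{not} affine in the sigmoidal features, so the classical CTRNN universal-approximation argument does not apply verbatim. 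I would neutralize it by choosing the reversal entries $B_i$ strictly outside the compact range of the $i$-th coordinate on $K$, so that $B_i-u_i\ge c>0$ there. Then
\[ h_i^*(u):=\frac{F_i(u)+u_i/\tau_i-A_i}{B_i-u_i} \]
is well defined and continuous on $K$, and matching $h_i$ to $h_i^*$ would make the effective output field agree with $F_i$ exactly on $K$.

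Then I would invoke the feed-forward universal approximation theorem (Cybenko/Funahashi/Hornik) to approximate each $h_i^*$ within $\delta$ on $K$ by a sigmoidal network with $N$ units, realizing those units as the interneurons of the LTC network: gap junctions of Eq.~\ref{eqgapjunc} (and leak) form the linear pre-activations of the outputs, and the outgoing chemical synapses apply $\sigma$. This produces an effective output field $g$ with $\sup_K|g-F|\le\big(\sup_K|B_i-u_i|\big)\,\delta=:C_1\delta$. With the output units initialized at $u(0)=x(0)$ and $g$ Lipschitz on $K$, Gr\"onwall's inequality yields $\max_{t\in I}|x(t)-u(t)|\le \tfrac{C_1\delta}{L}\big(e^{LT}-1\big)$, and taking $\delta$ small enough forces this below $\epsilon$.

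The main obstacle I anticipate is twofold. First, the interneurons cannot \emph{instantaneously} equal the sigmoidal features of the outputs; embedding a static feed-forward approximator inside an autonomous ODE requires a two-time-scale construction—hidden time-constants $\tau$ taken small so the interneurons track their feature values with controllable error—and that induced error must itself be absorbed into $\delta$. Second, the Gr\"onwall bound is valid only while $u(t)$ stays in $K$, where the approximation holds; closing the argument needs a bootstrapping/invariance step showing the approximate trajectory cannot escape $K$ before $T$, and this is precisely where the boundedness granted by the $\tau$-, $A$-, and $B$-constraints stated after Eq.~\ref{eq1} is used.
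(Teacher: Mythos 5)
Your skeleton (confine to a compact set, approximate the vector field in sup norm, convert to a trajectory bound by Gr\"onwall over $[0,T]$) is the same as the paper's, and your treatment of the bilinear LTC term is genuinely different: the paper makes $\tau_{sys}$ large so that the entire state-dependent decay $-(1/\tau + W_l\sigma(Cx+\mu))x$ is bounded by $\epsilon_l/2$ on the working set, leaving only the affine-in-sigmoid part $A + W_lB\sigma(Cx+\mu)$ to match $F$ (a direct reduction to Funahashi--Nakamura), whereas you keep the factor $(B_i-u_i)$, push $B_i$ outside the range of the trajectory, and divide it out via $h_i^*$. That is an interesting and arguably more faithful use of the LTC structure, but it creates a realizability obligation you do not discharge: $h_i^*$ must be representable as a synaptic drive $\sum_j w_{ij}\sigma(\cdot)$ under the model's sign conventions (the paper's Lemma~1 takes this drive to be nonnegative), so you would at least need to argue that $F_i(u)+u_i/\tau_i-A_i$ can be kept nonnegative on $K$ by choosing $A_i$ sufficiently negative.

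The genuine gap is the first ``obstacle'' you name and then defer: how the static features $\sigma(Cx+\mu)$ are produced by the hidden units of an \emph{autonomous} ODE. This is the actual content of the theorem --- everything else is the feed-forward universal approximation theorem plus Gr\"onwall --- and a two-time-scale/Tikhonov argument is not a detail one can gesture at: it requires uniform exponential stability of the fast subsystem, control of the boundary layer at $t=0$, and compatible initial data, none of which you set up. Moreover, the wiring you propose (gap junctions forming the linear pre-activations, outgoing chemical synapses applying $\sigma$ to them) is not available in the LTC model: a chemical synapse applies $\sigma$ to the single presynaptic state $V_j$ as in Eq.~\ref{eqchemsyn}, not to a linear combination of the outputs, so the combination $Cx+\mu$ must be stored in the hidden states themselves. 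The paper closes exactly this hole with the Funahashi--Nakamura augmentation: set $\tilde y = C\tilde x+\mu$, observe $\dot{\tilde y}=C\dot{\tilde x}$, so that $(\tilde x,\tilde y)$ satisfies an autonomous system of the network's form up to the offset $\mu/\tau_{sys}$; with the initial condition $\tilde y(0)=Cx(0)+\mu$ the hidden units track $C\tilde x+\mu$ \emph{exactly}, and the residual $\mu/\tau_{sys}$ is absorbed by a second Gr\"onwall estimate (this is where the second $\eta/2$ comes from). Your proof needs either this exact-embedding trick or a fully executed singular-perturbation argument; as written, the step from ``a one-hidden-layer network approximates $h_i^*$'' to ``an LTC RNN realizes the effective field $g$'' is missing.
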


We base our poof on the fundamental universal approximation theorem \cite{hornik1989multilayer} on feed-forward neural networks \cite{funahashi1989approximate,cybenko1989approximation,hornik1989multilayer}, recurrent neural networks (RNN) \cite{funahashi1989approximate,schafer2006recurrent} and time-continuous RNNs \cite{funahashi1993approximation}. We first define Lemma \ref{lem5} to be used in the proof of Theorem 1. 

\begin{lemma}
\label{lem5}
	for an $F: \mathbb{R}^n \rightarrow \mathbb{R^+}^n$ which is a bounded $C^1$-mapping, the differential equation
	\begin{equation}
	\label{eq7}
		\dot x = - (1/\tau +F(x))x + A + B F(x),
	\end{equation}
	in which $\tau$ is a positive constant, and A and B are constants coefficients bound to a range $[-\alpha, \beta]$ for $ 0 <\alpha < +\infty$, and $0 \leq \beta < +\infty$,
	has a unique solution on $[0, \infty)$.
\end{lemma}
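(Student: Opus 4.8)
The plan is to establish the result in two stages: first local existence and uniqueness via the Picard--Lindel\"of theorem, and then global extension to $[0,\infty)$ by ruling out finite-time blow-up through an \emph{a priori} energy estimate that exploits the dissipative structure of the equation.

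First I would verify the regularity of the vector field $f(x) = -(1/\tau + F(x))x + A + BF(x)$. Since $F$ is a bounded $C^1$-mapping and $\tau$, $A$, $B$ are constants, $f$ is a finite combination of sums and products of $C^1$ functions and is therefore itself $C^1$; in particular it is locally Lipschitz on $\mathbb{R}^n$. The Picard--Lindel\"of theorem then yields, for every initial condition $x(0)$, a unique solution defined on a maximal forward interval $[0, T_{\max})$. The whole question thus reduces to showing $T_{\max} = \infty$, because local uniqueness automatically propagates to uniqueness on the entire interval of existence.

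The core of the argument is the a priori bound. Here I would use the positivity hypothesis $F:\mathbb{R}^n\to (\mathbb{R}^+)^n$: each coefficient $1/\tau + F_i(x)$ is bounded below by $a := 1/\tau > 0$, so the linear part is genuinely dissipative rather than amplifying, while the forcing $A + B F(x)$ is uniformly bounded by some constant $K$ (since $A,B$ have entries in $[-\alpha,\beta]$ and $F$ is bounded). Differentiating $|x(t)|^2$ along the flow gives $\tfrac{d}{dt}|x|^2 = 2\langle x,\dot x\rangle \le -2a|x|^2 + 2K|x|$, and Young's inequality absorbs the cross term into the dissipative one to produce a differential inequality of the form $\tfrac{d}{dt}|x|^2 \le -a|x|^2 + K^2/a$. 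By Gr\"onwall's inequality, $|x(t)|$ is then bounded on every finite interval (indeed uniformly in $t$), so the solution cannot leave a compact set in finite time; the standard continuation theorem forces $T_{\max}=\infty$, giving a unique solution on all of $[0,\infty)$.

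I expect the main obstacle to be handling the state-dependence of the damping coefficient $F(x)$ correctly. One must be careful to use only the lower bound $1/\tau$ on $1/\tau + F(x)$ when controlling the dissipation, while the boundedness of $F$ is what controls the forcing term; conflating the two would break the estimate. The essential observation is that positivity of $F$ and the strictly positive baseline $1/\tau$ together guarantee dissipativity, so that bounded forcing cannot drive the trajectory to infinity in finite time. Once this estimate is secured, the remaining steps are routine invocations of standard ODE theory.
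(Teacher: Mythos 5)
Your proposal is correct, and its overall blueprint matches the paper's: establish local existence and uniqueness from the local Lipschitz continuity of the right-hand side, derive an \emph{a priori} bound confining the trajectory to a compact set, and conclude global existence on $[0,\infty)$ by continuation. The difference lies in how the bound is obtained. The paper works component-wise: it bounds $0 \le F_i(x) \le M$ and compares against the constant-coefficient scalar equation $\dot y = -(1/\tau + M)y + A + BM$, claiming each $x_i(t)$ is trapped between $|x_i(0)|$ and the equilibrium $\tau(A+BM)/(1+\tau M)$, and then delegates the existence/uniqueness machinery to Lemmas 2 and 3 of Funahashi--Nakamura. You instead run a norm-based energy estimate: the positivity of $F$ gives the uniform dissipation rate $1/\tau$, the boundedness of $F$ controls the forcing by a constant $K$, and Young plus Gr\"onwall yield $|x(t)|^2 \le \max\{|x(0)|^2, \tau^2K^2\}$. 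Your route is self-contained and arguably tighter than the paper's, whose comparison inequality is stated somewhat loosely (it treats the forcing $A + BF(x)$ as if it were the constant $A+BM$ without tracking signs, and the passage from component-wise bounds to the $\sqrt{n}C_1 \le x(t) \le \sqrt{n}C_2$ statement is informal). You also correctly identify the key structural point --- use the lower bound $1/\tau + F_i \ge 1/\tau$ for dissipation and the upper bound on $F$ only for the forcing --- which is exactly where a careless argument would go wrong. Note that for the global-existence conclusion alone, the linear growth of the vector field (a consequence of $F$ being bounded) would already suffice; the dissipativity buys you the uniform-in-time bound, which is what the paper's stated inequalities also aim for.
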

\begin{proof}
	Based on the assumptions, we can take a positive $M$, such that 
	\begin{equation}
		0 \leq F_i(x) \leq M (\forall i = 1,...,n)
	\end{equation}
	by looking at the solutions of the following differential equation: 
	\begin{equation}
		\dot y = - (1/\tau +M)y + A + B M,
	\end{equation}	
we can show that
\begin{equation}
\tiny
	min\{|x_i(0)|, \frac{\tau(A+BM)}{1+\tau M}\} \leq x_i(t) \leq max\{|x_i(0)|, \frac{\tau(A+BM)}{1+\tau M}\}, 
\end{equation}
if we set the output of the max to $C_{max_i}$ and the output of the min to $C_{min_i}$ and also set $C_1 = min\{C_{min_i}\}$ and $C_2 = max\{C_{max_i}\}$, then the solution $x(t)$ satisfies
\begin{equation}
	\sqrt{n}C_1 \leq x(t) \leq \sqrt{n}C_2.
\end{equation}
Based on Lemma 2 and Lemma 3 in \cite{funahashi1993approximation}, a unique solution exists on the interval $[0,+\infty)$.	
\end{proof}

Lemma \ref{lem5} demonstrates that an LTC network defined by Eq. \ref{eq7}, has a unique solution on $[0,\infty)$, since the output function is bound and $C^1$. 

\subsubsection{Proof of Theorem 1}
\begin{proof}
For proving Theorem 1, we adopt similar steps to that of Funahashi and Nakamura on the approximation ability of continuous time RNNs \cite{funahashi1993approximation}, to approximate a dynamical system with a larger dynamical system given by an LTC RNN. 

Part 1. We choose an $\eta$ which is in range $(0, min\{\epsilon, \lambda\})$, for $\epsilon > 0$, and $\lambda$ the distance between $\tilde D$ and boundary $\delta S$ of $S$. $D_{\eta}$ is set:
\begin{equation}
	D_{\eta} = \{ x \in \mathbb{R}^n; \exists z \in \tilde D, |x-z| \leq \eta \}.
\end{equation}
$D_{\eta}$ stands for a compact subset of $S$, because $\tilde D$ is compact. Thus, $F$ is Lipschitz on $D_{\eta}$ by Lemma 1 in \cite{funahashi1993approximation}. Let $L_F$ be the Lipschitz constant of $F | K_{\eta}$, then, we can choose an $\epsilon _l > 0$, such that
\begin{equation}
	\epsilon _l < \frac{\eta L_F}{2(exp L_F T-1)}.
\end{equation}

Based on the universal approximation theorem, there is an integer $N$, and an $n \times N$ matrix $B$, and an $N \times n$ matrix $C$ and an $N$-dimensional vector $\mu$ such that
\begin{equation}
\label{eq26}
	max |F(x) - B \sigma(Cx + \mu)| < \frac{\epsilon_l}{2}.
\end{equation}

We define a $C^1$-mapping $\tilde F: \mathbb{R}^n \rightarrow \mathbb{R}^n$ as: 
\begin{equation}
\label{eq27}
\small
	\tilde F(x) = - (1/\tau +W_l \sigma(Cx+\mu))x + A + W_l B \sigma(Cx + \mu),
\end{equation}
with parameters matching that of Eq. \ref{eq1} with $W_l =W$. 

We set the system's time constant, $\tau_{sys}$ to:

\begin{equation}
	\tau_{sys} = \frac{1}{1/\tau +W_l \sigma(Cx+\mu)}. 
\end{equation}
We chose a large $\tau_{sys}$, conditioned with the following:

\begin{align}
&(a)~~\forall x \in D_\eta;~~\abs{\frac{x}{\tau _{sys}}} < \frac{\epsilon _l}{2}\\
&(b)~~\abs{\frac{\mu}{\tau _{sys}}}< \frac{\eta L_{\tilde G}}{2(exp L_{\tilde G} T - 1)}~\text{and}~\abs{\frac{1}{\tau _{sys}}} < \frac{L_{\tilde G}}{2},
\end{align}

where $L_{\tilde G}/2$ is a lipschitz constant for the mapping $W_l\sigma: \mathbb{R}^{n+N} \rightarrow \mathbb{R}^{n+N}$ which we will determine later. To satisfy conditions (a) and (b), $\tau W_l << 1$ should hold true.

Then by Eq. \ref{eq26} and \ref{eq27}, we can prove:
\begin{equation}
	\underset{x \in D_\eta}{max} \abs{F(x) - \tilde F(x)}< \epsilon _l
\end{equation}

Let's set $x(t)$ and $\tilde x(t)$ with initial state $x(0) = \tilde x(0) = x_0 \in D$, as the solutions of equations below:
\begin{equation}
	\dot x = F(x),
\end{equation}
\begin{equation}
	\dot {\tilde x} = \tilde F(x).
\end{equation}

Based on Lemma 5 in \cite{funahashi1993approximation}, for any $t \in I$,

\begin{align}
\abs{x(t) - \tilde x(t)} &\leq \frac{\epsilon _l}{L_F}(exp L_F t -1) \\
&\leq \frac{\epsilon _l}{L_F}(exp L_F T -1).
\end{align}

Thus, based on the conditions on $\epsilon$,

\begin{equation}
\label{eq36}
	\underset{t \in I}{max} \abs{x(t) - \tilde x(t)}< \frac{\eta}{2}.
\end{equation}

Part 2. Let's Considering the following dynamical system defined by $\tilde F$ in Part 1:

\begin{equation}
\label{eq37}
	\dot{\tilde x} = -\frac{1}{\tau_{sys}} \tilde x + A_1 + W_lB \sigma(C\tilde x + \mu).
\end{equation}

Suppose we set $\tilde y = C\tilde x + \mu$; then:
\begin{equation}
	\dot{\tilde y} = C \dot{\tilde x} = -\frac{1}{\tau_{sys}} \tilde y + E \sigma(\tilde y) + A_2 + \frac{\mu}{\tau_{sys}},
\end{equation}
where $E = CW_lB$, an $N \times N$ matrix. We define

\begin{equation}
	\tilde z = [\tilde x_1, ..., \tilde x_n, \tilde y_1,...,\tilde y_n]^T, 
\end{equation}
and we set a mapping $\tilde G: \mathbb{R}^{n+N} \rightarrow \mathbb{R}^{n+N}$ as:
\begin{equation}
\label{eq40}
		\tilde G(\tilde z) = -\frac{1}{\tau_{sys}} \tilde z + W \sigma(\tilde z) + A + \frac{\mu _1}{\tau_{sys}},
\end{equation}

\begin{align}
	&W^{(n+N)\times(n+N)} = \left(\begin{array}{cc} 0 & B\\ 0 & E \end{array}\right), \\
	&\mu _1^{n+N} = \left(\begin{array}{c} 0 \\ {\mu} \end{array}\right),~~~ A^{n+N} = \left(\begin{array}{c} {A_1} \\ {A_2} \end{array}\right). 
\end{align}

By using Lemma 2 in \cite{funahashi1993approximation}, we can show that solutions of the following dynamical system:
\begin{equation}
	\dot{\tilde z} = \tilde G(\tilde z),~~~~\tilde y(0) = C\tilde x(0) + \mu, 
\end{equation}
are equivalent to the solutions of the Eq. \ref{eq37}. 

Let's define a new dynamical system $G: \mathbb{R}^{n+N} \rightarrow \mathbb{R}^{n+N}$ as follows:
\begin{equation}
\label{eq44}
	G(z) = -\frac{1}{\tau_{sys}} z + W \sigma(z) + A,
\end{equation}
where $z = [x_1, ..., x_n, y_1,...,y_n]^T$. Then the dynamical system below
\begin{equation}
\label{eqzzz}
	\dot z = -\frac{1}{\tau_{sys}} z + W \sigma(z) + A,
\end{equation}
can be realized by an LTC RNN, if we set $h(t) = [h_1(t),...,h_N(t)]^T$ as the hidden states, and $u(t) = [U_1(t),...,U_n(t)]^T$ as the output states of the system. Since $\tilde G$ and $G$ are both $C^1$-mapping and $\sigma^{\prime}(x)$ is bound, therefore, the mapping $\tilde z \rightarrow W\sigma(\tilde z) +A$ is Lipschitz on $\mathbb{R}^{n+N}$, with a Lipschitz constant $L_{\tilde G}/2$. As $L_{\tilde G}/2$ is Lipschitz constant for $- \tilde z/\tau_{sys}$ by condition (b) on $\tau_sys$, $L_{\tilde G}$ is a Lipschitz constant of  $\tilde G$. 

From Eq. \ref{eq40}, Eq. \ref{eq44}, and condition (b) of $\tau_{sys}$, we can derive the following:
\begin{equation}
	\abs{\tilde G(z) - G(z)} = \abs{\frac{\mu}{\tau_{sys}}} < \frac{\eta L_{\tilde G}}{2(exp L_{\tilde G}T-1)}.
\end{equation}
Accordingly, we can set $\tilde z(t)$ and $z(t)$, solutions of the dynamical systems:
\begin{equation}
	\dot{\tilde z} = \tilde G(z),~~~ \begin{cases} \tilde x(0) = x_0 \in D \\ \tilde y(0) = Cx_0 +\mu \end{cases}
\end{equation} 
\begin{equation}
	\dot{z} = G(z),~~~ \begin{cases} u(0) = x_0 \in D \\ \tilde h(0) = Cx_0 +\mu \end{cases}
\end{equation} 
By Lemma 5 of \cite{funahashi1993approximation}, we achieve
\begin{equation}
	\underset{t \in I}{max} \abs{\tilde z(t) - z(t)}< \frac{\eta}{2},
\end{equation}
and therefore we have:
\begin{equation}
\label{eq50}
	\underset{t \in I}{max} \abs{\tilde x(t) - u(t)}< \frac{\eta}{2},
\end{equation}

Part3. Now by using Eq. \ref{eq36} and Eq. \ref{eq50}, for a positive $\epsilon$, we can design an LTC network with internal dynamical state $z(t)$, with $\tau_{sys}$ and $W$. For x(t) satisfying $\dot x = F(x)$, if we initialize the network by $u(0) = x(0)$ and $h(0) = Cx(0) +\mu$, we obtain:
\begin{equation}
	\underset{t \in I}{max} \abs{x(t) - u(t)}< \frac{\eta}{2}+\frac{\eta}{2}= \eta < \epsilon.
\end{equation}
\end{proof}

REMARKS. The LTC's network architecture allows interneurons (hidden layer) to have recurrent connections to each other, however it assumes a feed forward connection stream from hidden nodes to the motor neuron units (output units). We assumed no inputs to the system and principally showed that the interneurons' network together with motor neurons can approximate any finite trajectory of an autonomous dynamical system. The proof subjected an LTC RNN with only chemical synapses. It is easy to extend the proof for a network which includes gap junctions as well, since their contribution to the network dynamics is by adding a linear term to the time-constant of the system ($\tau_{sys}$), and to the equilibrium state of a neuron, $A$ in Eq \ref{eqzzz}. 

\section{Bounds on $\tau_{sys}$ and state of an LTC RNN}
In this section, we prove that the time-constant and the state of neuronal activities in an LTC RNN is bound to a finite range, as depicted in lemmas 2 and 3, respectively.
\begin{lemma}
Let $v_i$ denote the state of a neuron $i$, receiving $N$ synaptic connections of the form Eq. \ref{eqchemsyn}, and P gap junctions of the form Eq. \ref{eqgapjunc} from the other neurons of a LTC network $G$, if dynamics of each neuron's state is determined by Eq. \ref{eqneuron}, then the time constant of the activity of the neuron, $\tau_i$, is bound to a range:
\begin{equation}
\label{eq:tau}
	C_i/(g_i+\sum^N_{j=1}w_{ij}+ \sum^P_{j=1}\hat w_{ij}) \leq \tau_i \leq C_i/(g_i + \sum^P_{j=1} \hat w_{ij}),
\end{equation}
\end{lemma}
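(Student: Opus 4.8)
The plan is to derive the effective (liquid) time-constant directly from the membrane equation and then sandwich it using the boundedness of the sigmoidal synaptic activations. First I would write out Eq.~\ref{eqneuron} for neuron $i$ with all incoming currents made explicit, substituting the $N$ chemical currents of the form Eq.~\ref{eqchemsyn} and the $P$ gap-junction currents of the form Eq.~\ref{eqgapjunc} into the sum $\sum_j I_{in}^{(ij)}$. Writing $C_i$ for $C_{m_i}$ and $g_i$ for $G_{Leak_i}$, and abbreviating each synaptic activation by $\sigma_{ij} = 1/(1+e^{-\gamma_{ij}(V_j+\mu_{ij})}) \in (0,1)$, this produces a scalar ODE for $V_i$.

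The central step is to collect every term proportional to $V_i$. The leak contributes $-g_i V_i$, each chemical synapse contributes $-w_{ij}\sigma_{ij} V_i$ (from the $-V_i$ inside the factor $E_{ij}-V_i$), and each gap junction contributes $-\hat w_{ij} V_i$. Grouping these, the equation takes the form $C_i \dot V_i = -\big(g_i + \sum_{j=1}^N w_{ij}\sigma_{ij} + \sum_{j=1}^P \hat w_{ij}\big)V_i + R_i$, where $R_i$ gathers all terms independent of $V_i$. Reading this as a linear relaxation in $V_i$, the instantaneous time-constant is the reciprocal of the bracketed coefficient scaled by $C_i$, namely $\tau_i = C_i / \big(g_i + \sum_{j=1}^N w_{ij}\sigma_{ij} + \sum_{j=1}^P \hat w_{ij}\big)$, which is the varying system time-constant $\tau_{sys}$ identified earlier.

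The bounds then follow from $0 < \sigma_{ij} < 1$. The denominator is squeezed between $g_i + \sum_{j=1}^P \hat w_{ij}$, approached as all sigmoids tend to $0$, and $g_i + \sum_{j=1}^N w_{ij} + \sum_{j=1}^P \hat w_{ij}$, approached as they tend to $1$; here I use that the synaptic weights $w_{ij}$ are non-negative, so that a larger $\sigma_{ij}$ can only enlarge the denominator. Because $g_i>0$ and all weights are non-negative, both denominators are strictly positive, so the reciprocal map is monotone decreasing and inverting the chain of inequalities reverses their direction, yielding precisely Eq.~\ref{eq:tau}. I expect no genuine obstacle: the argument is algebraic. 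The only care needed is to track that the gap-junction sum enters the coefficient linearly, unmodulated by any sigmoid, and therefore appears identically in both the lower and upper bounds, whereas the chemical-synapse sum $\sum_{j=1}^N w_{ij}$ appears only in the lower bound on $\tau_i$ (it is switched off in the upper bound). The inequalities are attained only in the limit, which is why the lemma states them with $\leq$ rather than strict inequality.
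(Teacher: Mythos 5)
Your proposal is correct and follows essentially the same route as the paper: both identify the coefficient of $V_i$ after substituting the chemical and gap-junction currents into Eq.~\ref{eqneuron}, read off $\tau_i$ as $C_i$ divided by that coefficient, and use $0<\sigma_{ij}<1$ to obtain the two extremes. The only cosmetic difference is that you keep $\sigma_{ij}$ symbolic and invoke monotonicity of the reciprocal at the end, whereas the paper substitutes the endpoint values $0$ and $1$ up front and solves the resulting linear ODE in each case; the conclusion is identical.
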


\begin{proof}
	The sigmoidal nonlinearity in Eq. \ref{eqchemsyn}, is a monotonically increasing function, bound to a range $0$ and $1$:
	\begin{equation}
			 0 < S(Y_j, \sigma _{ij}, \mu _{ij}, E_{ij}) < 1
	\end{equation}	
By replacing the upper-bound of $S$, in Eq. \ref{eqchemsyn} and then substituting the synaptic current in Eq. \ref{eqneuron}, we will have: 
\begin{equation}
\label{eq:neuron1}
	C_i \frac{dv_i}{dt} = g_i. (V_{leak} - v_i)~+ \sum_{j=1}^N w_{ij}(E_{ij} - v_i) + \sum_{j=1}^P \hat w_{ij}(v_j - v_i),	
\end{equation} 

\begin{align}
	C_i \frac{dv_i}{dt} = \underbrace{(g_i V_{leak}~+~\sum_{j=1}^N w_{ij}E_{ij})+\sum_{j=1}^P \hat w_{ij}v_j}_\text{A} \\
	- \underbrace{(g_i ~+~\sum_{j=1}^N w_{ij} + \sum_{j=1}^P \hat w_{ij})}_\text{B} v_i,
\label{eq:neuron2}
\end{align}	

\begin{equation}
\label{eq:neuron3}
	C_i \frac{dv_i}{dt} = A - B v_i.
\end{equation}
By assuming a fixed $v_j$, Eq. \ref{eq:neuron3} is an ordinary differential equation with solution of the form: 
\begin{equation}
\label{eq:neuron33}
	v_i(t) = k_1 e^{-\frac{B}{C_i}t} + \frac{A}{B}.
\end{equation}
From this solution, one can derive the lower bound of the system's time constant, $\tau _i^{min}$: 
\begin{equation}
\tau _i^{min} = \frac{C_i}{B} = \frac{C_i}{g_i ~+~\sum_{j=1}^N w_{ij} + \sum_{j=1}^P \hat w_{ij}}. 
\end{equation}

By replacing the lower-bound of $S$, in Eq. \ref{eq:neuron1}, the term $\sum_{j=1}^N w_{ij}(E_{ij} - v_i)$ becomes zero, therefore:

\begin{equation}
	C_i \frac{dv_i}{dt} = \underbrace{(g_i V_{leak}~+~\sum_{j=1}^P \hat w_{ij}v_j)}_\text{A}
	- \underbrace{(g_i ~+~\sum_{j=1}^P \hat w_{ij})}_\text{B} v_i.
\label{eq:neuron4}
\end{equation}

Thus, we can derive the upper-bound of the time constant, $\tau _i^{max}$:
\begin{equation}
\tau_i^{max} = \frac{C_i}{g_i~+~\sum_{j=1}^P \hat w_{ij}}.
\end{equation}
\end{proof}

\begin{lemma}
Let $v_i$ denote the state of a neuron $i$, receiving $N$ synaptic connections of form Eq. \ref{eqchemsyn}, from the other nodes of a network $G$, if dynamics of each neuron is determined by Eq. \ref{eqneuron}, then the hidden state of the neurons on a finite trajectory, $I=[0,T](0<T<+\infty)$, is bound as follows:
\begin{equation}
\label{eq:vbound}
	\underset{t \in I}{min}(V_{leak_i}, E_{ij}^{min}) \leq v_i(t) \leq \underset{t \in I}{max}(V_{leak_i}, E_{ij}^{max}),
\end{equation}
\end{lemma}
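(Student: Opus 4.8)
The plan is to rewrite the neuron's dynamics in the affine form already derived in the proof of Lemma~2 and then exploit the fact that its instantaneous equilibrium is a weighted average of the leak and reversal potentials. Substituting the chemical-synapse current \eqref{eqchemsyn} into \eqref{eqneuron} (with no gap junctions, as this lemma concerns only chemical synapses) gives $C_i \dot v_i = A - B v_i$, exactly as in \eqref{eq:neuron2}--\eqref{eq:neuron3}, but with the synaptic activations \emph{kept in place}: $A = g_i V_{leak_i} + \sum_{j=1}^N w_{ij}\,\sigma_{ij}\,E_{ij}$ and $B = g_i + \sum_{j=1}^N w_{ij}\,\sigma_{ij}$, where $\sigma_{ij} = \sigma_{ij}(v_j(t)) \in (0,1)$. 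The key observation is that, assuming $g_i>0$ and $w_{ij}>0$, the coefficient $B$ is strictly positive for all $t$, and the instantaneous equilibrium $v_i^{\ast}(t) = A/B$ is a convex combination of $V_{leak_i}$ and the reversals $E_{ij}$, with nonnegative weights $g_i, w_{i1}\sigma_{i1}, \dots, w_{iN}\sigma_{iN}$ summing to $B$. Hence $m \le v_i^{\ast}(t) \le M$ for every $t$, where $m = \min(V_{leak_i}, E_{ij}^{min})$ and $M = \max(V_{leak_i}, E_{ij}^{max})$ are the lower and upper endpoints appearing in \eqref{eq:vbound}.

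First I would carry out this affine reformulation explicitly, reusing the grouping of terms from \eqref{eq:neuron2} but without replacing $\sigma_{ij}$ by its bounds $0$ or $1$, and then verify the convex-combination claim by dividing through by the positive sum $B$. Second, I would establish that the interval $[m,M]$ is forward-invariant by a barrier (Nagumo-type) argument: on the upper endpoint $v_i = M$ one has $C_i \dot v_i = B\,(v_i^{\ast} - M) \le 0$ since $v_i^{\ast}\le M$, so the trajectory cannot leave through the top; symmetrically, at $v_i = m$ one has $C_i \dot v_i = B\,(v_i^{\ast} - m) \ge 0$. Because $B>0$ uniformly in $t$, these sign conditions hold throughout $I = [0,T]$, so any solution starting in $[m,M]$ remains in $[m,M]$, which is precisely the bound \eqref{eq:vbound}.

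The main obstacle I anticipate is the time-dependence of the coefficients $A$ and $B$: since the presynaptic states $v_j(t)$ evolve as part of the coupled network, one cannot simply read the bound off the closed-form solution $v_i(t) = k_1 e^{-Bt/C_i} + A/B$ used for a \emph{frozen} $v_j$ in Lemma~2. The invariant-set argument circumvents this because it requires only the instantaneous sign of $\dot v_i$ on the boundary of $[m,M]$, and this sign is determined solely by the position of $v_i^{\ast}(t)$ inside $[m,M]$---which holds for every $t$ no matter how the other neurons behave, as long as each $\sigma_{ij}(v_j(t))$ stays in $(0,1)$. This is also what decouples the per-neuron bound from the global network dynamics. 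A minor technical point will be to confirm that the weak inequalities on the boundary suffice to prevent escape and to handle initial data that may sit on $\partial[m,M]$, both of which follow from the standard comparison principle for the scalar affine ODE with a positive contraction rate $B/C_i$.
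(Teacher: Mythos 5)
Your proposal is correct and rests on the same key computation as the paper's proof: evaluating the sign of the right-hand side of the neuron ODE at the candidate extreme values $M=\max(V_{leak_i},E_{ij}^{max})$ and $m=\min(V_{leak_i},E_{ij}^{min})$, using positivity of $g_i$, $w_{ij}$ and $\sigma_{ij}\in(0,1)$ to conclude that $C_i\dot v_i\le 0$ at the top and $C_i\dot v_i\ge 0$ at the bottom. Where you genuinely diverge is in how that boundary sign condition is converted into a bound on the whole trajectory. The paper replaces the derivative by the finite difference $(v(t+\delta t)-v(t))/\delta t$, substitutes $v(t)=M$, and reads off $v(t+\delta t)\le M$; this is an informal discretization rather than a proof, and in the lower-bound half the paper even carries the inequality in the wrong direction (it derives $v(t+\delta t)\le m$ before asserting $v_i(t)\ge m$). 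Your route --- writing the dynamics as $C_i\dot v_i=B(v_i^{\ast}-v_i)$ with $B>0$ and $v_i^{\ast}=A/B$ a convex combination of $V_{leak_i}$ and the $E_{ij}$, then invoking forward invariance of $[m,M]$ via a Nagumo/comparison argument --- closes exactly this gap, and your explicit remark that the frozen-$v_j$ closed-form solution of Lemma~2 cannot be reused because $A$ and $B$ are time-varying identifies the issue the paper glosses over. The one hypothesis both you and the paper leave implicit is that the initial state must itself lie in $[m,M]$ (otherwise the claimed bound fails at $t=0$); it would be worth stating this explicitly, together with the Gr\"onwall-type estimate $(v_i(t)-M)_{+}\le (v_i(0)-M)_{+}\,e^{-\int_0^t B(s)/C_i\,ds}$ that makes the weak boundary inequalities sufficient.
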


\begin{proof}
Let us insert $ M = max\{V_{leak_i}, E_{ij}^{max}\}$ as the membrane potential $v_i(t)$ into Eq. \ref{eq:neuron1}:
\begin{equation}
\label{eq:neuronmax}
	C_i \frac{dv_i}{dt} = \underbrace{g_i (V_{leak}-M)}_\text{$\leq 0$} + \underbrace{\sum_{j=1}^N w_{ij}\sigma(v_j)(E_{ij}-M)}_\text{$\leq 0$}.
\end{equation}

Right hand side of Eq. \ref{eq:neuronmax}, is negative based on the conditions on M, positive weights and conductances, and the fact that $\sigma(v_i)$ is also positive in $\mathbb{R}^N$. Therefore, the left hand-side must also be negative and if we conduct an approximation on the derivative term:
\begin{equation}
	C_i \frac{dv_i}{dt} \leq 0,~~~ 
	\frac{dv_i}{dt} \approx \frac{v(t+\delta t) - v(t)}{\delta t} \leq 0,
\end{equation}

holds. by Substituting $v(t)$ with $M$, we have the following: 
\begin{equation}
	\frac{v(t+\delta t) - M}{\delta t} \leq 0~\rightarrow~v(t+\delta t) \leq M
\end{equation}

and therefore:
\begin{equation}
	v_i(t) \leq \underset{t \in I}{max}(V_{leak_i}, E_{ij}^{max}).
\end{equation}

Now if we substitute the membrane potential, $V_(i)$ with $ m = min\{V_{leak_i}, E_{ij}^{min}\}$, following the same methodology used for the proof of the upper bound, we can derive
\begin{equation}
	\frac{v(t+\delta t) - m}{\delta t} \leq 0~\rightarrow~v(t+\delta t) \leq m,
\end{equation}

and therefore:

\begin{equation}
	v_i(t) \geq \underset{t \in I}{min}(V_{leak_i}, E_{ij}^{min}).
\end{equation}	 
\end{proof}

\section{Conclusions}
We proved the universal approximation capability of liquid time-constant (LTC) RNNs, and showed how their varying dynamics are bound in a finite range. We believe that our work builds up the preliminary theoretical bases for investigating the capabilities of LTC networks.

\section*{Acknowledgments}
R.M.H. and R.G. are partially supported by Horizon-2020 ECSEL Project grant No. 783163 (iDev40), and the Austrian Research Promotion Agency (FFG), Project No. 860424. A.A. is supported by the National Science Foundation (NSF) Graduate Research Fellowship Program.


\begin{thebibliography}{}

\bibitem[\protect\citeauthoryear{Cybenko}{1989}]{cybenko1989approximation}
Cybenko, G.
\newblock 1989.
\newblock Approximation by superpositions of a sigmoidal function.
\newblock {\em Mathematics of control, signals and systems} 2(4):303--314.

\bibitem[\protect\citeauthoryear{Davis and Stretton}{1989}]{davis1989signaling}
Davis, R.~E., and Stretton, A.
\newblock 1989.
\newblock Signaling properties of ascaris motorneurons: graded active
  responses, graded synaptic transmission, and tonic transmitter release.
\newblock {\em Journal of Neuroscience} 9(2):415--425.

\bibitem[\protect\citeauthoryear{Funahashi and
  Nakamura}{1993}]{funahashi1993approximation}
Funahashi, K.-i., and Nakamura, Y.
\newblock 1993.
\newblock Approximation of dynamical systems by continuous time recurrent
  neural networks.
\newblock {\em Neural networks} 6(6):801--806.

\bibitem[\protect\citeauthoryear{Funahashi}{1989}]{funahashi1989approximate}
Funahashi, K.-I.
\newblock 1989.
\newblock On the approximate realization of continuous mappings by neural
  networks.
\newblock {\em Neural networks} 2(3):183--192.

\bibitem[\protect\citeauthoryear{Gleeson \bgroup et al\mbox.\egroup
  }{2018}]{gleeson2018c302}
Gleeson, P.; Lung, D.; Grosu, R.; Hasani, R.; and Larson, S.~D.
\newblock 2018.
\newblock c302: a multiscale framework for modelling the nervous system of
  caenorhabditis elegans.
\newblock {\em Phil. Trans. R. Soc. B} 373(1758):20170379.

\bibitem[\protect\citeauthoryear{Hasani \bgroup et al\mbox.\egroup
  }{2017a}]{m2017sim}
Hasani, R.~M.; Beneder, V.; Fuchs, M.; Lung, D.; and Grosu, R.
\newblock 2017a.
\newblock Sim-ce: An advanced simulink platform for studying the brain of
  caenorhabditis elegans.
\newblock {\em arXiv preprint arXiv:1703.06270}.

\bibitem[\protect\citeauthoryear{Hasani \bgroup et al\mbox.\egroup
  }{2017b}]{hasani2017non}
Hasani, R.~M.; Fuchs, M.; Beneder, V.; and Grosu, R.
\newblock 2017b.
\newblock Non-associative learning representation in the nervous system of the
  nematode caenorhabditis elegans.
\newblock {\em arXiv preprint arXiv:1703.06264}.

\bibitem[\protect\citeauthoryear{Hasani \bgroup et al\mbox.\egroup
  }{2018}]{hasani2018re}
Hasani, R.~M.; Lechner, M.; Amini, A.; Rus, D.; and Grosu, R.
\newblock 2018.
\newblock Re-purposing compact neuronal circuit policies to govern
  reinforcement learning tasks.
\newblock {\em arXiv preprint arXiv:1809.04423}.

\bibitem[\protect\citeauthoryear{Hornik, Stinchcombe, and
  White}{1989}]{hornik1989multilayer}
Hornik, K.; Stinchcombe, M.; and White, H.
\newblock 1989.
\newblock Multilayer feedforward networks are universal approximators.
\newblock {\em Neural networks} 2(5):359--366.

\bibitem[\protect\citeauthoryear{Islam \bgroup et al\mbox.\egroup
  }{2016}]{islam2016probabilistic}
Islam, M.~A.; Wang, Q.; Hasani, R.~M.; Bal{\'u}n, O.; Clarke, E.~M.; Grosu, R.;
  and Smolka, S.~A.
\newblock 2016.
\newblock Probabilistic reachability analysis of the tap withdrawal circuit in
  caenorhabditis elegans.
\newblock In {\em High Level Design Validation and Test Workshop (HLDVT), 2016
  IEEE International},  170--177.
\newblock IEEE.

\bibitem[\protect\citeauthoryear{Koch and Segev}{1998}]{Koch98}
Koch, C., and Segev, K.
\newblock 1998.
\newblock {\em Methods in Neuronal Modeling - From Ions to Networks}.
\newblock MIT press, second edition.

\bibitem[\protect\citeauthoryear{Lockery and
  Sejnowski}{1992}]{lockery1992distributed}
Lockery, S., and Sejnowski, T.
\newblock 1992.
\newblock Distributed processing of sensory information in the leech. iii. a
  dynamical neural network model of the local bending reflex.
\newblock {\em Journal of Neuroscience} 12(10):3877--3895.

\bibitem[\protect\citeauthoryear{Mozer, Kazakov, and
  Lindsey}{2017}]{mozer2017discrete}
Mozer, M.~C.; Kazakov, D.; and Lindsey, R.~V.
\newblock 2017.
\newblock Discrete event, continuous time rnns.
\newblock {\em arXiv preprint arXiv:1710.04110}.

\bibitem[\protect\citeauthoryear{Sarma \bgroup et al\mbox.\egroup
  }{2018}]{sarma2018openworm}
Sarma, G.~P.; Lee, C.~W.; Portegys, T.; Ghayoomie, V.; Jacobs, T.; Alicea, B.;
  Cantarelli, M.; Currie, M.; Gerkin, R.~C.; Gingell, S.; et~al.
\newblock 2018.
\newblock Openworm: overview and recent advances in integrative biological
  simulation of caenorhabditis elegans.
\newblock {\em Phil. Trans. R. Soc. B} 373(1758):20170382.

\bibitem[\protect\citeauthoryear{Sch{\"a}fer and
  Zimmermann}{2006}]{schafer2006recurrent}
Sch{\"a}fer, A.~M., and Zimmermann, H.~G.
\newblock 2006.
\newblock Recurrent neural networks are universal approximators.
\newblock In {\em International Conference on Artificial Neural Networks},
  632--640.
\newblock Springer.

\bibitem[\protect\citeauthoryear{Wicks, Roehrig, and
  Rankin}{1996}]{wicks1996dynamic}
Wicks, S.~R.; Roehrig, C.~J.; and Rankin, C.~H.
\newblock 1996.
\newblock A dynamic network simulation of the nematode tap withdrawal circuit:
  predictions concerning synaptic function using behavioral criteria.
\newblock {\em Journal of Neuroscience} 16(12):4017--4031.

\end{thebibliography}
\bibliographystyle{aaai}

\end{document}